\newcommand{\Var}{\mathrm{Var}}
\newcommand{\semi} {\mathbb{S}}
\newcommand{\Real} {\mathbb{R}}
\newcommand{\Corr}{\mathrm{Corr}}
\newcommand{\Cov}{\mathrm{Cov}}
\newcommand{\trace}{\textbf{trace}}
\newcommand{\Ind}{\mathbbm{1}}
\newcommand{\identity}{\mathbf{I}}
\newcommand{\prox}{\mathbf{prox}}
\newcommand{\Argmin}{\text{argmin}}
\definecolor{Gray}{gray}{0.9}
\definecolor{LightCyan}{rgb}{0.88,1,1}
\definecolor{LightBlue}{rgb}{0,100,100}
\definecolor{LightYellow}{rgb}{100,100,0}
\definecolor{LightRed}{rgb}{100,0,0}
\begin{document}
\title{Sparse canonical correlation analysis}
\author{Xiaotong Suo \thanks{Institute for Computational and Mathematical Engineering,  Stanford University. Corresponding author: xiaotong@stanford.edu.}, Victor Minden \thanks{Institute for Computational and Mathematical Engineering, Stanford University.}, Bradley Nelson \thanks{Institute for Computational and Mathematical Engineering, Stanford university.}, \\ Robert Tibshirani \thanks{Departments of Biomedical Data Sciences, and Statistics, Stanford University}, Michael Saunders\thanks{Department of Management Science and Engineering and Institute for Computational and Mathematical Engineering, Stanford University}}
\maketitle
\begin{abstract}
Canonical correlation analysis was proposed by \citet{HOTELLING01121936} and it measures linear relationship between two multidimensional variables. In high dimensional setting, the classical canonical correlation analysis breaks down. We propose a sparse canonical correlation analysis by adding $\ell_1$ constraints on the canonical vectors and show how to solve it efficiently using linearized alternating direction method of multipliers (ADMM) and using TFOCS as a black box. We illustrate this idea on simulated data. 
\end{abstract}
\section{Introduction}
Correlation measures dependence between two or more random variables. The most popular measure is the Pearson's correlation coefficient. For random variables $x, y \in \reals$, the population correlation coefficient is defined as $\rho_{x,y} = \frac{\cov (x, y)}{\sqrt{\var(x)}\sqrt{\var(y)}}$. It is of importance that the correlation takes out the variance in random variables $x$ and $y$ by dividing the standard deviation of them. We could not emphasize more the importance of this standardization, and we present two toy examples in \autoref{covariancetable}. Clearly, $x$ and $y$ are more correlated in the left table than the right table even though the covariance between $x$ and $y$ are seemingly much smaller in the left table than that in the right table. 
\begin{table}
\centering
\begin{tabular}{l | c | c |c}
Covariance & $x$ & $y$  \\
\hline 
$x$ &0.1 & 0.09 &  \\ 
$y$ & 0.09 & 0.1 & 
\end{tabular}
\quad \quad \text{and}\quad \quad
\begin{tabular}{l | c | c |c}
Covariance & $x$ & $y$  \\
\hline 
$x$ & 0.9 & 0.3&  \\ 
$y$ & 0.3 & 0.9 & 
\end{tabular}
\caption{Covariance Matrix}
\label{covariancetable}
\end{table}

Canonical correlation studies correlation between two multidimensional random variables.  Let $x \in \reals ^p$ and $y \in \reals^q$ be random variables, and let $\Sigma_x$, $\Sigma_y$ be covariance of $x$ and $y$ respectively, and their covariance matrix be $\Sigma_{xy}$.  In simple words, it seeks linear combinations of $x$ and $y$ such that the resulting values are mostly correlated. The mathematical definition is
\begin{align}
\underset{u\in \reals^p , v\in\reals^q}{\text{maximize}} \ \ \ &  \frac{u^T \Sigma_{xy}v}{\sqrt{u' \Sigma_{x} u}\sqrt{v' \Sigma_y v)}}. 
\label{realone}
\end{align}
Solving \autoref{realone} is easy in low dimensional setting, i.e., $n \gg p$, because we can use change of variables: $\Sigma^{1/2}_{x}u = a $, and $\Sigma_y^{1/2}v = b$.  \autoref{realone} becomes 
\begin{align}
\underset{a\in \reals^p, b\in \reals^q}{\text{maximize}} \ \ \ &  \frac{a^T\Sigma_x^{-1/2}\Sigma_{xy}\Sigma_y^{-1/2}b}{\sqrt{a^Ta}\sqrt{b^Tb}}.
\label{svd}
\end{align}

Solving \autoref{svd} is equivalent to solving singular decomposition of the new matrix $\Sigma^{-1/2}_x\Sigma_{xy}\Sigma_y^{-1/2}$.  However, when $p \gg n$, this method is not feasible because $\Sigma_x^{-1}, \Sigma_y^{-1}$ can not be estimated accurately.  Moreover, we might want to seek a sparse representation of features in $x$ and features in $y$ so that we can get interpretability of the data. 

Let $X\in \reals^{n \times p }, Y \in \reals^{n \times q}$ be the data matrix. We consider a regularized version of the problem
\begin{align*}
\underset{u,v}{\text{minimize}} \ \ \ & -\Cov(Xu, Yv) + \tau_1|u|_{1}  + \tau_2|v|_1\\
\text{subject to}\ \ \ & \Var(Xu) = 1; \Var(Yv) = 1, 
\label{exact}
\end{align*}
and since the constraints of minimization problem are not convex, we further relax it as

\begin{equation}
\begin{array}{ll}
    \mbox{minimize}   & -\Cov(Xu, Yv) + \tau_1|u|_{1}  + \tau_2|v|_1\\
    \mbox{subject to} &  \Var(Xu) \leq 1; \Var(Yv) \leq 1.
    \end{array}
    \label{biconvex}
\end{equation}
Note that resulting problem is still nonconvex, however, it is a biconvex.

\paragraph{Related Work} Though some research has been done in canonical correlation analysis in high dimensional setting, there are issues we would like to point out: 
\begin{enumerate}
\item Computationally efficient algorithms.  To our best knowledge, we have not found an  algorithm which can be scaled efficiently to solve \autoref{biconvex}. 
\item Correct relaxations. An efficient algorithm to find sparse canonical vectors was proposed by \citet{Witten} but we think the relaxation of $\Var(Xu)=1$ to $\Var(u) = 1$, $\Var(Yv) = 1$ to $\Var(v)=1$ are not very realistic in high dimensional setting. Our algorithms relax the $\Var(Xu)=1$ to $\Var(Xu)\leq 1$, and $\Var(Yv)=1$ to $\Var(v)=1$. Though we can not guarantee the solutions are on the boundary, it is often the case. 
\item Simulated Examples. We consider a variety of simulated examples, including those which are heavily considered in the literature. We also presented some examples which are not considered in the literature but we think their structures are closer to structures of a real data set. 
\end{enumerate}

The paper is organized as follows. Section \ref{sec: main sec} contains motivations and algorithms for solving the first sparse canonical vectors. Subsection \ref{sub: remaining canonical vectors} contains an algorithm to find $r$th canonical vectors, though we only focus on estimating the first pair of canonical vectors in this paper. We show solving sparse canonical vectors is equivalent to solving sparse principle component analysis in a special case in Section \ref{sec: a special case}. We demonstrate the usage of such algorithms on simulated data in Section \ref{sec: simulated data} and show a detailed comparisons among sparse CCA proposed by \citet{sparseCCA}, \citet{Witten}, and \citet{sparseig}. Section \ref{sec: discussion} contains some discussion and directions for future work. 

\section{Sparse Canonical Correlation Analysis}
\label{sec: main sec}

\subsection{The basic idea}
\begin{align*}
\underset{u,v}{\text{minimize}} \ \ \ & -\Cov(Xu, Yv) + \tau_1|u|_{1}  + \tau_2|v|_1\\
\text{subject to}\ \ \ & \Var(Xu) \leq 1; \Var(Yv) \leq 1, 
\end{align*}
This resulting problem is biconvex, i.e., if we fix $u$, the resulting minimization is convex respect to $v$ and if we fix $v$, the minimization is convex respect to $u$:
\begin{enumerate}
\item
	Fix $v$, solve for $u$:
		\begin{align}
		\underset{u}{\text{minimize}} \ \ \ & -\Cov(Xu, Yv) + \tau_1|u|_{1}  +\Ind\{u: \Var(Xu) \leq 1\} 
		\label{solveforu}
		\end{align}
\item
	Fix $u$, solve for $v$:
		\begin{align}
			\underset{v}{\text{minimize}} \ \ \ & -\Cov(Xu, Yv) + \tau_2|v|_1 + \Ind\{v: \Var(Yv) \leq 1\} 
		\label{solveforv}
		\end{align}
\end{enumerate}
In \autoref{algorithmic details}, we describe how to solve the subproblems \autoref{solveforu} and \autoref{solveforv} in details. 

Our formulation is similar to the method proposed by \citet{Witten}. Their formulation is 
\begin{equation}
\begin{array}{ll}
    \mbox{minimize}   & -\Cov(Xu, Yv) + \tau_1|u|_{1}  + \tau_2|v|_1\nonumber\\
    \mbox{subject to} & \|u\|_2 \leq 1;  \|v\|_2 \leq 1.
    \end{array}
\end{equation}
This formulation is obtained by replacing covariance matrices $X^TX$ and $Y^TY$ with identity matrix. They also used alternating minimization approach, and by fixing one of the variable, the other variable has a closed form solution. Their formulation can be solved very efficiently as a result. However, we now present a simple example to show that the solution of their formulation can be very inaccurate and non-sparse. 

\begin{example}
We generate our data as follows: 
\begin{align*}
\begin{pmatrix} X \\ Y \end{pmatrix}\sim N(\begin{pmatrix}
0\\0
\end{pmatrix}, \begin{pmatrix}\Sigma_X &\Sigma_{XY}\\ \Sigma_{YX} & \Sigma_Y\end{pmatrix}), 
\end{align*}
where 
\begin{align*}
(\Sigma_x)_{i,j} = (\Sigma_y)_{i,j} = 0.9 ^{|a-b|},  \Sigma_{XY} = \Sigma_X (u_1 \rho v_1^T) \Sigma_Y, 
\end{align*}
and $u_1$ and $v_1$ are sparse canonical vectors, and the number of non-zero elements are chosen to be 5,  5, respectively. The location of nonzero elements are chosen randomly and normalized with respect to the true covariance of $X$ and $Y$, i.e., $ u_1^T\Sigma_{X}u_1 = 1$ and $v_1\Sigma_{Y}v_1 = 1$.

We first presented a proposition, which was in the paper of \citet{thresholding}: 
\begin{proposition}
\begin{equation}
\begin{array}{ll}
    \mbox{maximize}   & a^T\Sigma_{xy}b\\
    \mbox{subject to} & a^T\Sigma_{x}a = 1; b^T\Sigma_{y}b = 1
    \end{array}
    \label{paper:iterativethresholding}
\end{equation}

When $\Sigma_{xy}$ is of rank 1, the solution (up to sign jointly) of Equation \ref{paper:iterativethresholding} if $(\theta, \eta)$ if and only if the covariance structure between $X$ and $Y$ can be written as 

\[
\Sigma_{xy} = \lambda \Sigma_x \theta \eta^T \Sigma_y,
\]
where $0 \leq \lambda\leq 1$, $\theta ^T \Sigma_x \theta = 1$, and $\eta^T\Sigma_y\eta = 1$. In other words, the correlation between $a^TX $ and $b^TY$ are maximized by $\Corr(\theta^X, \eta^Y)$, and $\lambda$ is the canonical correlation between $X$ and $Y$. 

More generally, the solution of \ref{paper:iterativethresholding} is $(\theta_1, \eta_1)$ if and only if the covariance structure between $X$ and $Y$ can be written  $\Sigma_{xy} = \Sigma_x(\sum_{i = 1}^r \lambda_i \theta_1 \eta_i^T)\Sigma_y$. 

\end{proposition}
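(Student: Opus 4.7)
The plan is to reduce the problem to a standard singular value decomposition via the substitution $\alpha = \Sigma_x^{1/2}a$, $\beta = \Sigma_y^{1/2}b$, exactly as in the passage from \autoref{realone} to \autoref{svd} above. Under this change of variables, the constraints become $\|\alpha\|_2 = \|\beta\|_2 = 1$, and the objective becomes $\alpha^T M \beta$ with $M := \Sigma_x^{-1/2}\Sigma_{xy}\Sigma_y^{-1/2}$. Maximizing a bilinear form over the product of unit spheres is a classical variational characterization of the top singular value of $M$, attained at the top pair of singular vectors. So the optimal $(a,b)$ are in bijection (through $\Sigma_x^{1/2}$ and $\Sigma_y^{1/2}$) with the top singular vectors of $M$, and the optimal value equals the top singular value.

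For the forward (``if'') direction, assume $\Sigma_{xy} = \lambda \Sigma_x \theta \eta^T \Sigma_y$ with $\theta^T\Sigma_x\theta = 1$ and $\eta^T\Sigma_y\eta = 1$. Then a direct computation gives
\[
M = \Sigma_x^{-1/2}\Sigma_{xy}\Sigma_y^{-1/2} = \lambda\,(\Sigma_x^{1/2}\theta)(\Sigma_y^{1/2}\eta)^T,
\]
and by the normalizations the two factors are unit vectors. Hence $M$ has rank one with singular value $\lambda$ and singular vectors $\Sigma_x^{1/2}\theta, \Sigma_y^{1/2}\eta$, so pulling back through the change of variables yields the optimum $(a,b) = (\theta,\eta)$.

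For the converse, assume $\Sigma_{xy}$ has rank one and $(\theta,\eta)$ is optimal. Then $M$ has rank one and its unique (up to a joint sign) unit singular vectors must be $\Sigma_x^{1/2}\theta$ and $\Sigma_y^{1/2}\eta$, with singular value $\lambda := \theta^T\Sigma_{xy}\eta$. Writing the rank-one SVD of $M$ and multiplying on the left by $\Sigma_x^{1/2}$ and on the right by $\Sigma_y^{1/2}$ recovers $\Sigma_{xy} = \lambda\,\Sigma_x\theta\eta^T\Sigma_y$, which is the claimed form. The bound $0 \le \lambda \le 1$ follows because $\lambda$ equals $\Corr(\theta^T X, \eta^T Y)$ by construction, since $\Var(\theta^T X) = \theta^T\Sigma_x\theta = 1$ and similarly for $\eta$; one then invokes Cauchy--Schwarz (or positive semidefiniteness of the joint covariance).

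The only delicate point is the uniqueness argument needed in the converse: strictly speaking, if $M$ has a repeated top singular value the minimizing pair is not unique, but in the stated rank-one setting the top singular value is simple (all other singular values are zero), so uniqueness up to sign holds. The generalization to $\Sigma_{xy} = \Sigma_x\bigl(\sum_{i=1}^r \lambda_i\theta_i\eta_i^T\bigr)\Sigma_y$ follows the same outline, replacing the rank-one SVD of $M$ by a rank-$r$ SVD and requiring $\lambda_1$ to be a simple top singular value in order to pin down $(\theta_1,\eta_1)$ up to sign.
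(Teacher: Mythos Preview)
The paper does not actually supply a proof of this proposition; it is quoted from \citet{thresholding} and stated without argument. So there is no ``paper's own proof'' to compare against. That said, your argument is correct and is exactly the natural one: it is the same change of variables $\alpha=\Sigma_x^{1/2}a$, $\beta=\Sigma_y^{1/2}b$ that the paper itself uses in passing from \autoref{realone} to \autoref{svd}, followed by the variational characterization of the SVD. Both directions of the ``if and only if'' are handled cleanly, the rank-one case gives the needed uniqueness (up to joint sign), and the bound $0\le\lambda\le 1$ is correctly identified as the Cauchy--Schwarz bound on a correlation. Your remark that the general rank-$r$ statement requires $\lambda_1$ to be simple for uniqueness is a fair caveat that the proposition as stated glosses over.
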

The sample size  is $n = 400$, and $p_u = p_v = 800$. We denote their solutions as $\hat u_w, \hat v_w$, and our approach as $\hat u_1, \hat v_1$. We have two main goals when we solve for canonical vectors: maximizing the correlation while maintaining the sparsity in canonical vectors. A common way to measure the performance is to use the Pareto curve, seen in  \autoref{fig:example0} and \autoref{fig:example0witten}. The left panel traces 
\begin{align*}
x: \frac{-\hat{u}^T X^TY \hat{v}}{\sqrt{\hat{u}^TX^TX\hat{u}}\sqrt{\hat{v}Y^TY\hat{v}}} \text{ v.s. } y: \|\hat u\|_{\ell_1} + \|\hat v\|_{\ell_1}, 
\end{align*}
and right panel traces
\begin{align*}
x: \frac{-\hat{u}^T \Sigma_{XY} \hat{v}}{\sqrt{\hat{u}^T\Sigma_X\hat{u}}\sqrt{\hat{v}^T\Sigma_Y\hat{v}}} \text{ v.s. } y: \|\hat u\|_{\ell_1} + \|\hat v\|_{\ell_1}. 
\end{align*}
We prefer a point which is close to the left corner of the Pareto curve, because it represents a solution which consists of sparse canonical vectors and achieves the maximum correlation. 
\begin{figure}
\includegraphics[scale= 0.5]{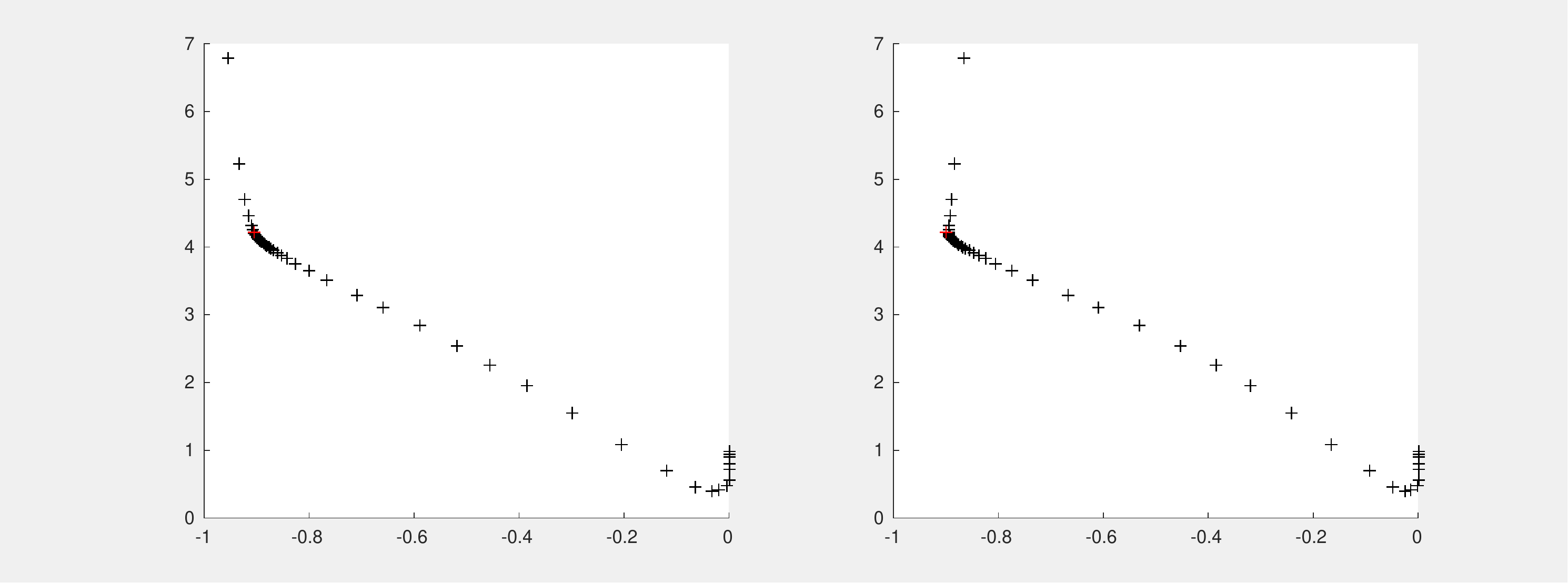}
\caption{Pareto curves of our estimators. Left panel is the plot of of the estimated correlation $\hat u ^TX^TY \hat v$ versus the sum of $\|\hat u\|_{\ell_1}$ and $\|\hat v\|_{\ell_1}$, averaged over 100 simulations. The red dot corresponds to the ($u^TX^TY v$, $\|u\|_{\ell_1} + \|v\|_{\ell_1}$). Right panel is the plot of the estimated correlation $\hat u ^T\Sigma_{XY}\hat v$ versus the sum of $\|\hat u\|_{\ell_1}$ and $\|\hat v\|_{\ell_1}$, averaged over 100 simulations. The red dot corresponds to the ($u^T\Sigma_{XY} v$, $\|u\|_{\ell_1} + \|v\|_{\ell_1}$). Note that the red dot is on the pareto curve, which means that our algorithm achieve this optimal value with right choice of regularizers. } 
\label{fig:example0}
\end{figure}

\begin{figure}
\includegraphics[scale= 0.35]{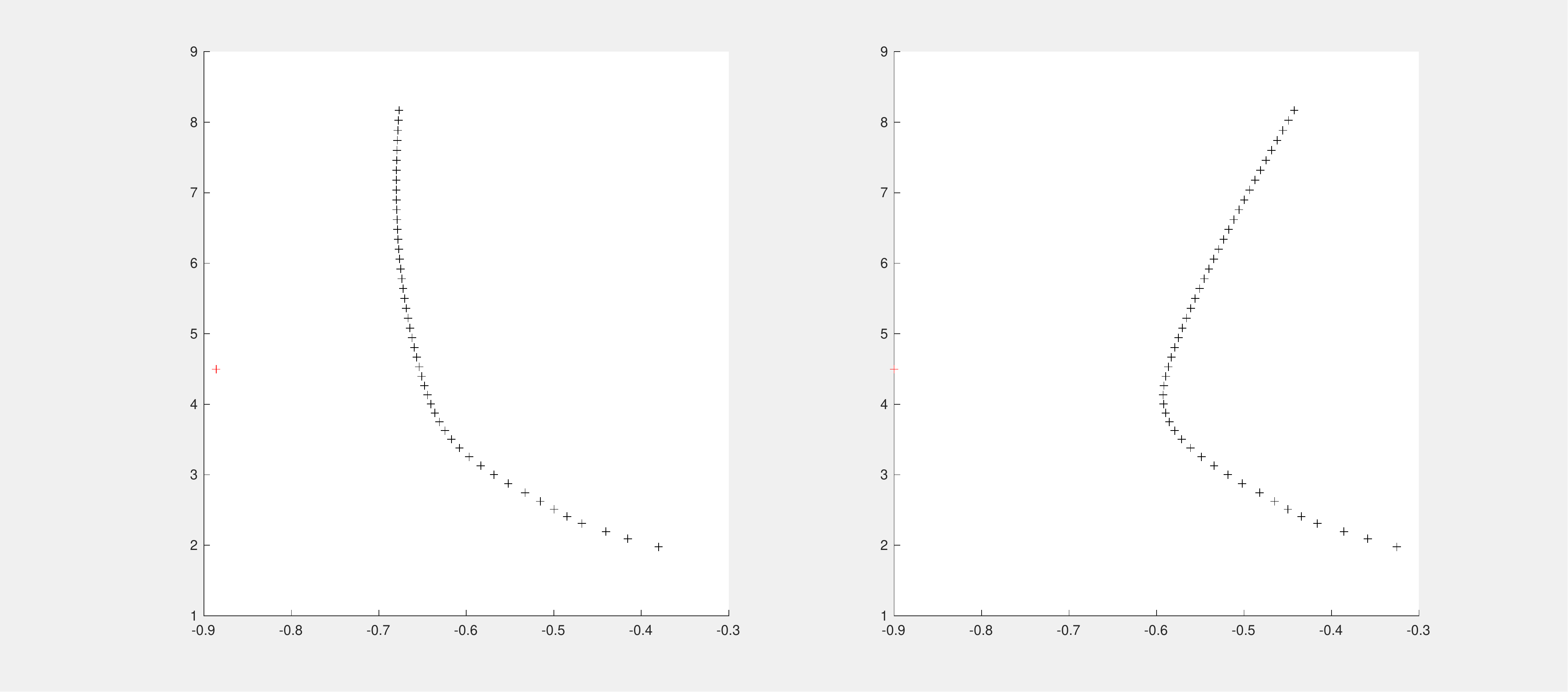}
\caption{Pareto curves of \citet{Witten}. Left panel is the plot of of the estimated correlation $\hat u ^TX^TY \hat v$ versus the sum of $\|\hat u\|_{\ell_1}$ and $\|\hat v\|_{\ell_1}$, averaged over 100 simulations. The red dot corresponds to the ($u^TX^TY v$, $\|u\|_{\ell_1} + \|v\|_{\ell_1}$). Right panel is the plot of the estimated correlation $\hat u ^T\Sigma_{XY}\hat v$ versus the sum of $\|\hat u\|_{\ell_1}$ and $\|\hat v\|_{\ell_1}$, averaged over 100 simulations. The red dot corresponds to the ($u^T\Sigma_{XY} v$, $\|u\|_{\ell_1} + \|v\|_{\ell_1}$). Note that the red dot is not on the pareto curve, which means that their algorithm could not achieve this optimal value with any choice of regularizers. } 
\label{fig:example0witten}
\end{figure}

The left panel of \autoref{fig:example0} is the plot of of the estimated correlation $\hat u ^TX^TY \hat v$ versus the sum of $\|\hat u\|_{\ell_1}$ and $\|\hat v\|_{\ell_1}$, averaged over 100 simulations. The right panel is the plot of  estimated correlation $\hat u ^T\Sigma_{XY}\hat v$ versus the sum of $\|\hat u\|_{\ell_1}$ and $\|\hat v\|_{\ell_1}$, averaged over 100 simulations. Note that
 we replace the sample covariance with the true covariance. From both panels, with the right choice of regularizers, our algorithm can achieve the optimal values. However, as shown in \autoref{fig:example0witten}, the solutions of \citet{Witten} are  very far from the true solution. The red dots are not on their solutions' path, meaning that their results do not achieve the optimal value with any choices of regularizers.
\end{example}
\subsection{Algorithmic details}
\label{algorithmic details}
\subsubsection{Linearized alternating direction minimization method}
We assume that the data matrix $X$ and $Y$ are centred. We now present how to solve the minimization problem \autoref{solveforu} in detail, and the algorithm works similarly for $v$. 

With the data matrix $X$ and $Y$, the minimization \autoref{solveforu} becomes
\begin{align}
\underset{u}{\text{minimize}} -u^TX^TYv + \tau_1\|u\|_1 + \Ind\{u:
 \|Xu\|_2 \leq 1\}. 
\label{dataversion: solveforu}
\end{align}
Let $z = Xu$, we have
\begin{align}
\underset{u}{\text{minimize}} \ \ \ &\underbrace{-u^TX^TYv + \tau_1\|u\|_1}_{f(u)} + \underbrace{\Ind\{ \|z\|_2 \leq 1\}}_{g(z)}\nonumber\\
\text{subject to } \ \ \ &Xu = z
\label{foru}
\end{align}
We can use linearized alternating direction method of multipliers \citep*{ADMM} to solve this problem. The alternating direction method of multipliers is to solve the augmented Lagrangian by solving each variable and the dual variable one by one until convergence. The detailed derivation can be seen in Appendix and the complete algorithm can be seen in Algorithm \ref{alg:CCA}.

\subsubsection{TFOCS}
The other approach to solve \autoref{solveforu} is to use TFOCS. We rewrite the \autoref{solveforu} as follows and use \texttt{tfocs\_SCD} function to solve it. 

Since $v$ is fixed, and let $c = v^TY^TX $, minimizing the objective function of \autoref{dataversion: solveforu} 
\begin{align}
&-cu + \tau_1\|u\|_1 + \Ind\{\|Xu\|_2 \leq 1\}
\label{solveforuwithc}
\end{align}
is equivalent to minimizing 
\begin{align*}
&-cu/\tau_1 + \|u\|_1 + \Ind\{\|Xu\|_2 \leq 1\}
\end{align*}
Instead of solving this objective function, we solve instead
\begin{align*}
\|u\|_1 + \frac{1}{2}\mu\|u - (u_{old} + \frac{1}{\tau \mu}c)\|_2^2+ \Ind\{\|Xu\|_2 \leq 1\}
\end{align*}
Intuitively, we solve the \autoref{solveforuwithc} without going too far from current approximation. This formulation can be solved using \texttt{tfocs\_SCD}. \cite{TFOCS}. 

\begin{algorithm}
   \caption{Sparse CCA}
\label{alg:CCA}
    \begin{algorithmic}[1]
      \Function{CCA}{$X, Y$}
        \State Initialize $u_0$ and $v_0$
	\While {not converged}
	\State Fix $v_k$
		\While{not converged}
			\begin{align*}
				u^{k+1} &\leftarrow \prox_{\mu f}(u^k - \frac{\mu}{\lambda}X^T(Xu^k - z^k + \xi^k))\\
				z^{k+1} &\leftarrow \prox_{\lambda g}(Xu^{k+1} + \xi^k)\\
				\xi^{k+1} &\leftarrow \xi^k  + Xu^{k+1} - z^{k+1}
			\end{align*}
		\EndWhile
	\State Fix $u_k$,
		\While{not converged}
			\begin{align*}
				v^{k+1} &\leftarrow \prox_{\mu f}(v^k - \frac{\mu}{\lambda}Y^T(Yv^k - z^k + \xi^k))\\
				z^{k+1} &\leftarrow \prox_{\lambda g}(Yv^{k+1} + \xi^k)\\
				\xi^{k+1} &\leftarrow \xi^k  + Yv^{k+1} - z^{k+1}
			\end{align*}
	\EndWhile
	\EndWhile
            \EndFunction
\end{algorithmic}
\end{algorithm}

%

\subsection{The remaining canonical vectors}
\label{sub: remaining canonical vectors}
Given the first $r-1$ canonical vectors $U = \begin{pmatrix}u_1& \cdots& u_{r-1}\end{pmatrix}$ and $\begin{pmatrix}v_1& \cdots& v_{r-1}\end{pmatrix}$, we consider solving the $r$-th canonical vectors by 
\begin{align*}
\underset{u,v}{\text{minimize}} \ \ \ & -u^TX^TYv + \tau_1|u|_{1}  + \tau_2|v|_1 + \Ind\{u: \|Xu\|_2 \leq 1\} + \Ind\{v:\|Yv\|_2 \leq 1\} \\
\text{subject to } \ \ \ & U^TX^TXu = 0; V^T Y^TYv = 0. 
\end{align*}
This problem is biconvex, and we use the same approach of fixing one variable and solve for the other one. Fixing $v$, we get $\hat u$ by solving
\begin{align*}
\underset{u}{\text{minimize}} \ \ \ & -u^TX^TYv + \tau_1|u|_{1} + \Ind\{z: \|z\|_2 \leq 1\}\\
\text{subject to } \ \ \ & Xu = z; U^TX^TXz = 0_{r-1}, 
\end{align*} 
and fixing $u$, we get $\hat v$ by solving
\begin{align*}
\underset{v}{\text{minimize}} \ \ \ & -u^TX^TYv + \tau_1|v|_{1} + \Ind\{z: \|z\|_2 \leq 1\}\\
\text{subject to } \ \ \ & Yv = z; V^TY^TYz = 0_{r-1}.  
\end{align*} 
The constraint can be combined as 
\begin{align*}
\begin{pmatrix}
X\\
U^TX^TX
\end{pmatrix}u -\begin{pmatrix}I\\0\end{pmatrix}z = 0 \quad 
\begin{pmatrix}
Y\\
V^TT^TY
\end{pmatrix}v -\begin{pmatrix}I\\0\end{pmatrix}z = 0
\end{align*}
Let $\tilde{X} = \begin{pmatrix}
X\\
U^TX^TX
\end{pmatrix}$, fixing $v$, we can easily see that 
\begin{align*}
-u^T X^TYv =-u^T \tilde{X}^TYv, 
\end{align*}
and fixing $u$,
\begin{align*}
-u^T X^T Y v = -u^T X^T \tilde{Y}v. 
\end{align*}
Therefore, we can use the linearized ADMM with the new matrix $\tilde{X}$ and $\tilde{Y}$ to get the $r$-th canonical vectors.   

\subsection{A bridge for the covariance matrix}
As mentioned in \autoref{sec: main sec}, \cite{Witten} proposed to replace the covariance matrix with an identity matrix. Since their solution can be solved efficient, it is of interest to investigate the relation between our method and theirs. Therefore, we now write the covariance matrix as 
\begin{align*}
\alpha_x X^TX  + (1-\alpha_x) \identity_{p_u, p_u}. 
\end{align*}
We can replace similarly for $Y$. 

The constraint $\|Xu\|_2^2$ gives 
\begin{align*}
u^T(\alpha_x X^TX  + (1-\alpha_x) \identity)u &= \alpha_x \|Xu\|_2^2 + (1-\alpha_x)\|u\|_2^2 =\| \begin{pmatrix}\alpha_x X \\ (1-\alpha) \identity_{p_u, p_u} \end{pmatrix}u\|_2^2. 
\end{align*}
This form can be solved using the methods we proposed by changing the linear operator with the matrix above. If interested to see how solutions change from \citet{Witten} to our solution, one can use the above to see the path using different choices of $\alpha_x, \alpha_y$.

\subsection{Semidefinite Programming Approach}
We now show that \autoref{biconvex} can be solved using a semi-definite programming approach. This idea is not new, but borrowed from the approach to solve sparse principle components \citep{sparsePCA} with some modifications.  Let $h = \begin{pmatrix}
u \\ v \end{pmatrix}$, the problem of 
\begin{equation}
\begin{array}{ll}
    \mbox{minimize}   & -u^TX^TYv\\
    \mbox{subject to} & u^TX^TXu = 1 ;  v^TY^TYv = 1 \\
    & |v| \leq t_v ; |u|\leq t_u
    \end{array}
    \label{global}
\end{equation}
can be written as
\begin{equation}
\begin{array}{ll}
    \mbox{minimize}   & -\frac{1}{2}h^T \begin{pmatrix}0&X^TY\\Y^TX& 0\end{pmatrix} h   \\
    \mbox{subject to} & h^T\begin{pmatrix}
    X^TX & 0 \\ 0 & 0\end{pmatrix} h  =1;  h^T\begin{pmatrix}
    0 & 0 \\ 0 & Y^TY\end{pmatrix}  h =1   \\
   & |h 1_v|  \leq t_v ; |h1_u|\leq t_u
    \end{array}
    \label{beforetrace}
\end{equation}
Now, we transfer the objective function using a trace operation: 
\begin{align*}
-\frac{1}{2}h^T \begin{pmatrix}0&X^TY\\Y^TX& 0\end{pmatrix} h  = -\trace(\begin{pmatrix}0&X^TY\\Y^TX& 0\end{pmatrix} hh^T)\
\end{align*}
Let $H = hh^T$ and 
\[
	Q = \begin{pmatrix}0&X^TY\\Y^TX& 0\end{pmatrix}; Q_X = \begin{pmatrix}
    X^TX & 0 \\ 0 & 0\end{pmatrix}; Q_Y = \begin{pmatrix}
    0 & 0 \\ 0 & Y^TY\end{pmatrix}
\]

\begin{align*}
h^T\begin{pmatrix}X^TX & 0 \\ 0 & 0\end{pmatrix} h = \trace(\begin{pmatrix}X^TX & 0 \\ 0 & 0\end{pmatrix}H)
\end{align*}

\begin{equation}
\begin{array}{ll}
    \mbox{minimize}   &  -\trace(QH) + \lambda\sum_{i,j}|H_{ij}|\\
    \mbox{subject to} & H\in \semi_{+}^{p_u + p_v}\\
    &\trace(Q_XH) =1; \trace(Q_yH) =1 
    \end{array}
    \label{aftertrace}
\end{equation}
Semi-definite programming problem can be very computational expansive, especially when $p$ is much greater than $n$ Therefore, we do not compute the sparse canonical vectors using this formulation. It would be a interesting direction to explore if there exists an efficient algorithm to solve this problem efficiently. 
\section{A Special Case}
\label{sec: a special case}
In this section, we consider a special case, where the covariance matrices of $x$ and $y$ is identity. 
Suppose that the matrices $\Sigma_x = \Sigma_y = I$, and thus the covariance matrix $\Sigma_{xy} = U\Lambda V^T$, where $U\in \Real^{p \times k}, V\in \Real^{q \times k}$, and $\Lambda\in \Real^{k\times k}$ is diagonal.  In other words, $\Sigma_{xy}$ is rank $k$.  We now show that our problem is similar to solving a sparse principle component analysis. Note that $U^TU = I_k$ and $V^TV = I_k$.
\begin{align*}
\begin{pmatrix}
X \\ Y  \end{pmatrix} \sim \normal\big( \begin{pmatrix}
0 \\ 0  \end{pmatrix}  \begin{pmatrix} I & U \Lambda V^T \\ V \Lambda U^T & I \end{pmatrix} \big)
\end{align*}
\begin{theorem}
\label{theorem: sparse pca}
Estimation of $u$ and $v$ can be obtained using spectral decomposition and thus we can use software which solves sparse principle components to solve the problem above. 
\end{theorem}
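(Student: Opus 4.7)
The plan is to diagonalize the joint covariance matrix $\Sigma$ explicitly and read off the canonical vectors from its eigenvectors. Writing
\[
\Sigma = \begin{pmatrix} I_p & U\Lambda V^T \\ V\Lambda U^T & I_q \end{pmatrix},
\]
I would first look for eigenvectors of the form $w = \bigl((Ua)^T,\, (Vb)^T\bigr)^T$ with $a,b\in\Real^k$, since $U,V$ have orthonormal columns. The eigenvalue equation $\Sigma w = \mu w$ then reduces to the $2k\times 2k$ system $\Lambda b = (\mu-1)a$ and $\Lambda a = (\mu-1)b$, which implies $\Lambda^2 a = (\mu-1)^2 a$. Hence $\mu = 1\pm \lambda_i$ with paired eigenvectors proportional to $\bigl(U_i^T, \pm V_i^T\bigr)^T$. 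Vectors orthogonal to the column spaces of $U$ and $V$ lie in the $1$-eigenspace of $\Sigma$, accounting for the remaining $p+q-2k$ dimensions.

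Next I would verify that the blocks of the leading eigenvector recover the first canonical pair. The top eigenvalue is $1+\lambda_1$ (where $\lambda_1$ is the largest canonical correlation), with eigenvector $w_1 \propto (U_1^T, V_1^T)^T$. Splitting $w_1$ into its first $p$ and last $q$ coordinates yields $u_1 \propto U_1$ and $v_1 \propto V_1$, which by construction are exactly the left and right singular vectors of $\Sigma_{xy} = U\Lambda V^T$, i.e.\ the canonical vectors in the $\Sigma_x = \Sigma_y = I$ case. Analogously, the $i$th largest eigenvector yields the $i$th canonical pair, so classical CCA here is equivalent to spectral decomposition of $\Sigma$.

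To lift this to the sparse setting, observe that enforcing sparsity in $u$ and in $v$ simultaneously is equivalent to enforcing sparsity in the concatenated vector $w = (u^T, v^T)^T$, since a coordinate of $w$ is zero iff the corresponding coordinate of $u$ or $v$ is zero. The problem of finding the top sparse eigenvector of $\Sigma$,
\[
\underset{w}{\text{maximize}}\ \ w^T \Sigma w - \tau \|w\|_1 \quad \text{subject to } \|w\|_2 \leq 1,
\]
is precisely sparse PCA on the input covariance $\Sigma$. By the spectral identification above, its solution $w^\star$ has its top $p$ entries giving a sparse estimate of $u_1$ and its bottom $q$ entries giving a sparse estimate of $v_1$, so any existing sparse PCA solver applied to $\Sigma$ returns sparse canonical vectors.

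The main obstacle is the matching of normalizations and penalty structure. Sparse CCA as formulated in \autoref{biconvex} uses two separate constraints $\Var(Xu)\le 1,\ \Var(Yv)\le 1$ and two separate penalties $\tau_1\|u\|_1,\ \tau_2\|v\|_1$, whereas sparse PCA on $\Sigma$ naturally imposes a single joint constraint $\|w\|_2\le 1$ and a single penalty $\tau\|w\|_1$. In the $\Sigma_x=\Sigma_y=I$ case the variance constraints collapse to $\|u\|_2\le 1$ and $\|v\|_2\le 1$, which is compatible up to rescaling with $\|w\|_2\le \sqrt{2}$, but the separation of $\tau_1$ and $\tau_2$ is lost. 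I would address this by either taking $\tau_1=\tau_2=\tau$ (in which case the equivalence is exact), or by running sparse PCA on a reweighted $\Sigma$ to mimic differential penalization; the former is the version in which the equivalence to sparse PCA is cleanest and suffices to justify the theorem as stated.
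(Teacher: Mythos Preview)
Your proposal is correct and follows essentially the same approach as the paper: both explicitly diagonalize the joint covariance $\Sigma$ to obtain eigenvalues $1\pm\lambda_i$ with eigenvectors proportional to $(U_i^T,\pm V_i^T)^T$ and a $(p+q-2k)$-dimensional unit eigenspace, the only difference being that the paper writes the factorization of the off-diagonal block directly while you reach it via an eigenvector ansatz. Your discussion of the constraint and penalty matching between sparse CCA and sparse PCA is more explicit than the paper's, which simply observes that $\Sigma$ is a spiked covariance and invokes known SNR thresholds.
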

\begin{proof}
Let $\Sigma =  A + I = \begin{pmatrix} 0 & U \Lambda V^T \\ V \Lambda U^T & 0 \end{pmatrix} + \begin{pmatrix}
I & 0\\0 & I
\end{pmatrix} $.  
\begin{align*}
A = \begin{pmatrix} 0 & U \Lambda V^T \\ V \Lambda U^T & 0 \end{pmatrix} = \frac{1 }{2}\begin{pmatrix}
U\\V
\end{pmatrix} \Lambda \begin{pmatrix}
U^T & V^T
\end{pmatrix}  - \frac{1}{2} \begin{pmatrix}
U \\ -V 
\end{pmatrix}\Lambda \begin{pmatrix}
U^T & -V^T 
\end{pmatrix}
\end{align*}
Let $U_i, V_i$ denote the $i$th columns of $U$ and $V$ respectively, and denote 
$$W_i = \frac{1}{\sqrt{2}} \begin{pmatrix} U_i \\ V_i \end{pmatrix}, \qquad W_{k+i} = \frac{1}{\sqrt{2}} \begin{pmatrix} U_i \\ -V_i \end{pmatrix}$$
for $i=1,\dots,k$.  Note that $W_i^T W_j = \mathbb{I}(i=j)$, for $i,j = 1,\dots,2k$.  Let $\{W_i\}_{i=2k+1}^{p_u + p_v}$ be an orthonormal set of vectors orthogonal to $\{W_i\}_{i=1}^{2k}$.  Then the matrix $\Sigma = A+I$ has the following spectral decomposition
$$\Sigma = \sum_{i=1}^k (1+\Lambda_{i,i}) W_i W_i^T + \sum_{i=k+1}^{2k}(1-\Lambda_{i-k,i-k}) W_i W_i^T + \sum_{i=2k+1}^{p_u+p_v} W_i W_i$$

Therefore, $\Sigma$ can be thought as a spiked covariance matrix, where the signal to noise ratio (SNR) can be interpreted as $1+\min_i \Lambda_{i,i}$. We know that in the high dimensional regime, if 
\begin{align*}
\text{SNR} \geq \sqrt \frac{p}{n}
\end{align*}
we can recover $u$ and $v$ even if $u$ and $v$ are not sparse. However, if 
\begin{align*}
\text{SNR} < \sqrt {\frac{p}{n}}, 
\end{align*} 
we need to enforce the sparsity in $u$ and $v$, see \citet{Bphase} and \citet{paul2007} for details. 
\end{proof}

We can see from \autoref{theorem: sparse pca} that if the covariance matrices of $x$ and $y$ are identity, or act more or less like identity matrices, solving canonical vectors can be roughly viewed as solving sparse principle components. Therefore, in this case, estimating canonical vectors is roughly as hard as solving sparse eigenvectors. 

\section{Simulated Data}
\label{sec: simulated data}

In this section, we carefully analyze different cases of covariance structure of $x$ and $y$ and compare the performance of our methods with other methods. We first explain how we generate the data. 

Let $X \in \Real^{n \times p}$ and $Y \in \Real^{n \times q}$ be the data generated from the model 
\begin{align*}
\begin{pmatrix} x \\ y \end{pmatrix} \sim \normal \big( \begin{pmatrix} 0\\0 \end{pmatrix}, \begin{pmatrix}\Sigma_x & \Sigma_{xy}\\ \Sigma_{yx} & \Sigma_y\end{pmatrix}\big ),
\end{align*}
where $\Sigma_{xy} = \rho \Sigma_x u v^T \Sigma_y$, where $u$ and $v$ are the true canonical vectors, and $\rho$ is the true canonical correlation. We would like to estimate $u$ and $v$ from the data matrices $X$ and $Y$. We compare our methods with other methods available on different choices of triplets: $(n, p, q)$, where $n$ is the number of samples, $p$ is the number of features in $X$, and $q$ is the number of features in $Y$. 
In order to measure the discrepancy of estimated $\hat u$, $\hat v$ with the true $u$ and $v$, we use  the sin of the angle between $\hat u$ and $u$, $\hat v$ and $v$ \citeauthor{johnstone} 
\begin{align}
\loss(\hat v, v) &= \min(\|\hat v - v\|_2^2, \|\hat v+ v\|_2^2\\
&= 2(1- |\left\langle \hat v, v \right\rangle|),
\end{align}
where $\|\hat v\|_2 = \|v\|_2 = 1$. 
\subsection{Identity-like covariance models}
In sparse canonical correlation analysis literature, structured covariance of $x$ and $y$ are highly investigated. For examples, covariance of $x$ may be identity covariance, toeplitz, or have sparse inverse covariance.  From the plot of the covariances matrix in \autoref{fig:topelitz} and \autoref{fig:sparseinverse}, we can see toeplitz and sparse inverse covariance act more or less like identity matrices. Since the covariance of $x$ and $y$ act more or less like identity matrices, as discussed in previous section, solving $u$ and $v$ is roughly as hard as solving sparse eigenvectors. In other words, the covariance of $x$ and $y$ do not change the signal in $u$ and $v$ much and as a result, the signal in $\Sigma_xy$ is very sparse.  In this case, an initial guess is very important. We propose the following procedure:
\begin{enumerate}
\item Denoise the matrix $X^TY$ by solft-thresholding the matrix elementwise, call the resulting matrix as $S_{xy}$. 
\item We obtain the initial guess as follows:
	\begin{enumerate}
		\item Take singular value decomposition of $S_{xy}$, denoted as $\hat U$ and $\hat V$. 
		\item Normalize the each column $u_i$, $v_i$ in $\hat U$ and $\hat V$ by 
		$u_i \leftarrow \frac{u_i}{\sqrt{u_i^T (X^T  X)  u_i}}$	and $v_i \leftarrow \frac{v_i}{\sqrt{v_i^TY^T  Y v_i}}$. Denote the resulting $\hat U$ and $\hat V$ as $\tilde{U}$ and $\tilde{V}$. 
		\item Calculate $\tilde D = \tilde{U}^T X^T Y \tilde{V}$ Choose the index $k$ where the maximum diagonal element of $\tilde D $ is obtained, i.e., $\diag(D)_k = \max\{\diag(D)\}$
	\end{enumerate}
\item Use the initial guess to start the alternating minimization algorithm.
\end{enumerate}

We consider three types of covariance matrices in this category: toeplitz, identity, and sparse inverse matrices. 
\begin{figure}
\includegraphics[scale=0.5]{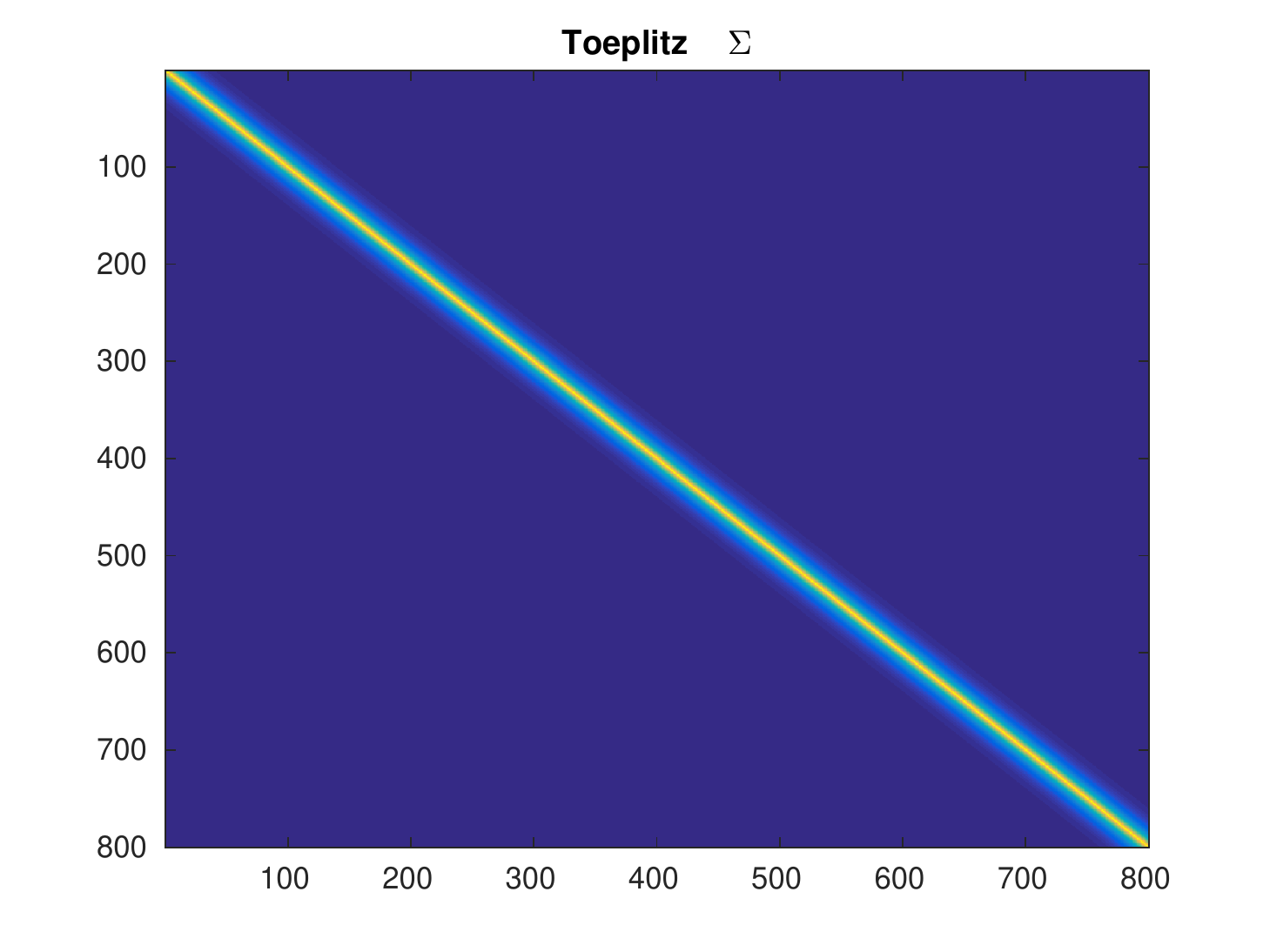}
\includegraphics[scale=0.5]{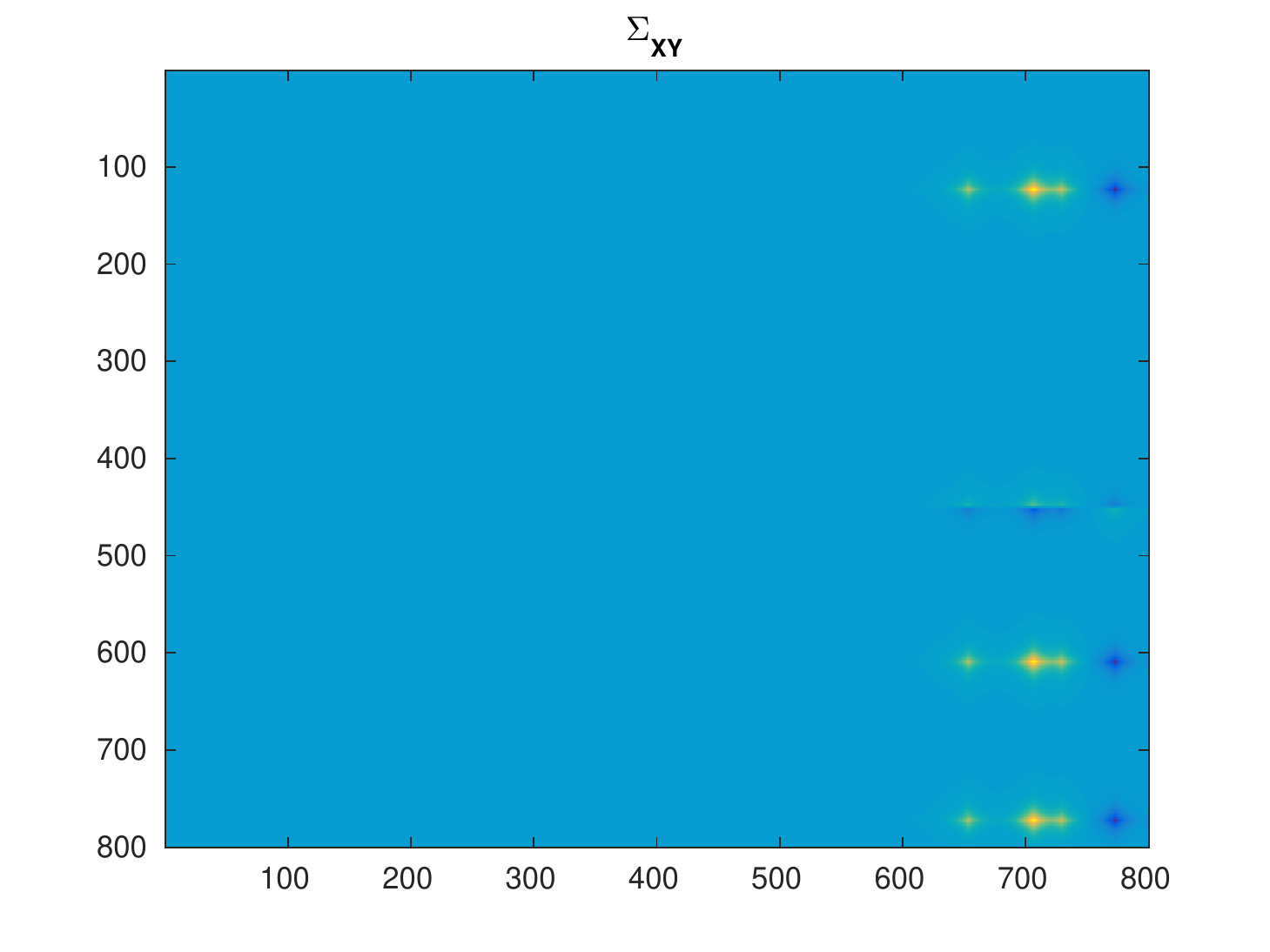}
\caption{\textit{Toeplitz matrices with $\sigma_{ij} = 0.9^{|i-j|}$: we can see that even though it is not exactly identity matrix, the general structure does look like identity matrix. }}
\label{fig:topelitz}
\end{figure}

\begin{figure}
\includegraphics[scale=0.5]{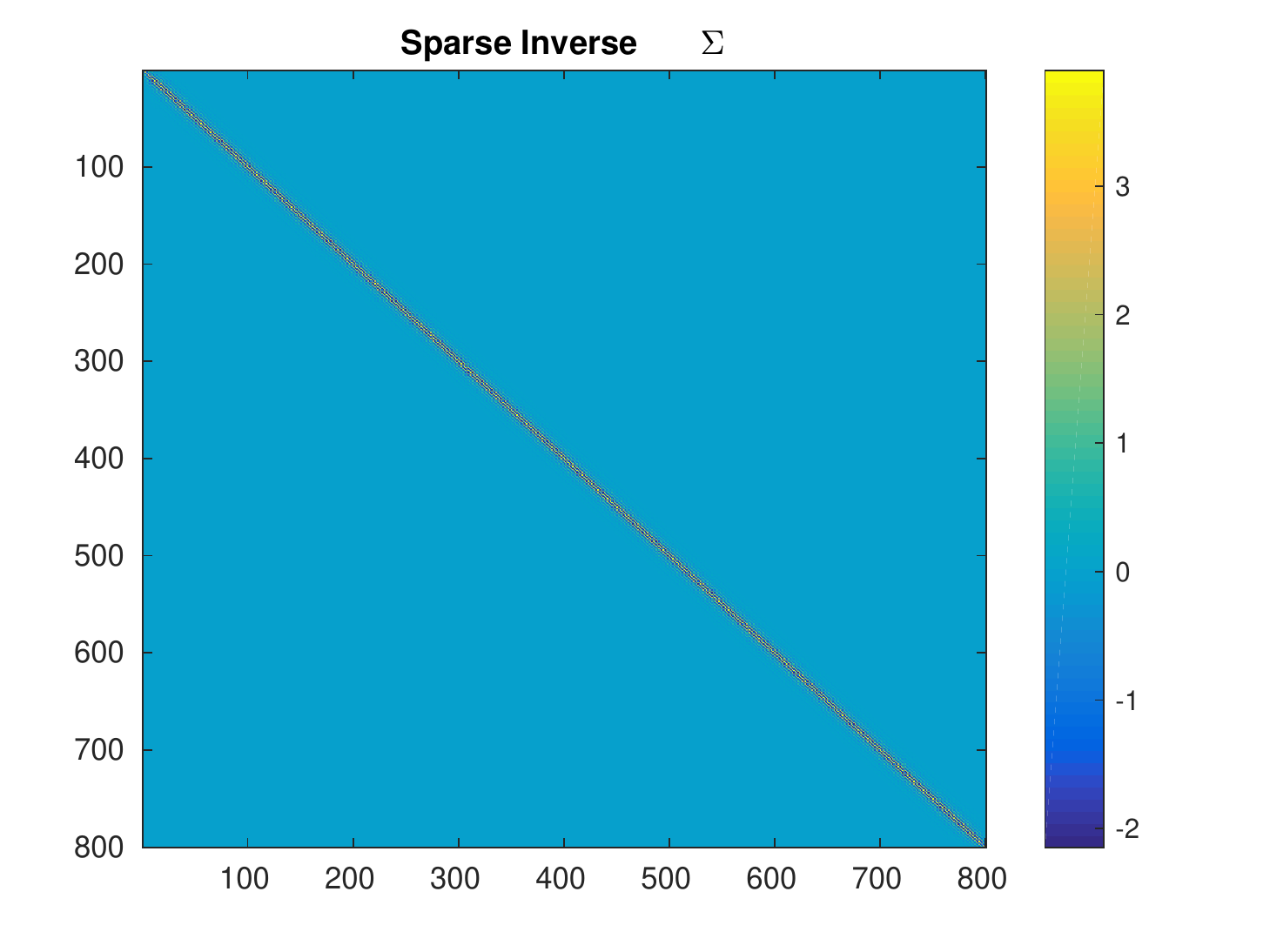}
\includegraphics[scale=0.5]{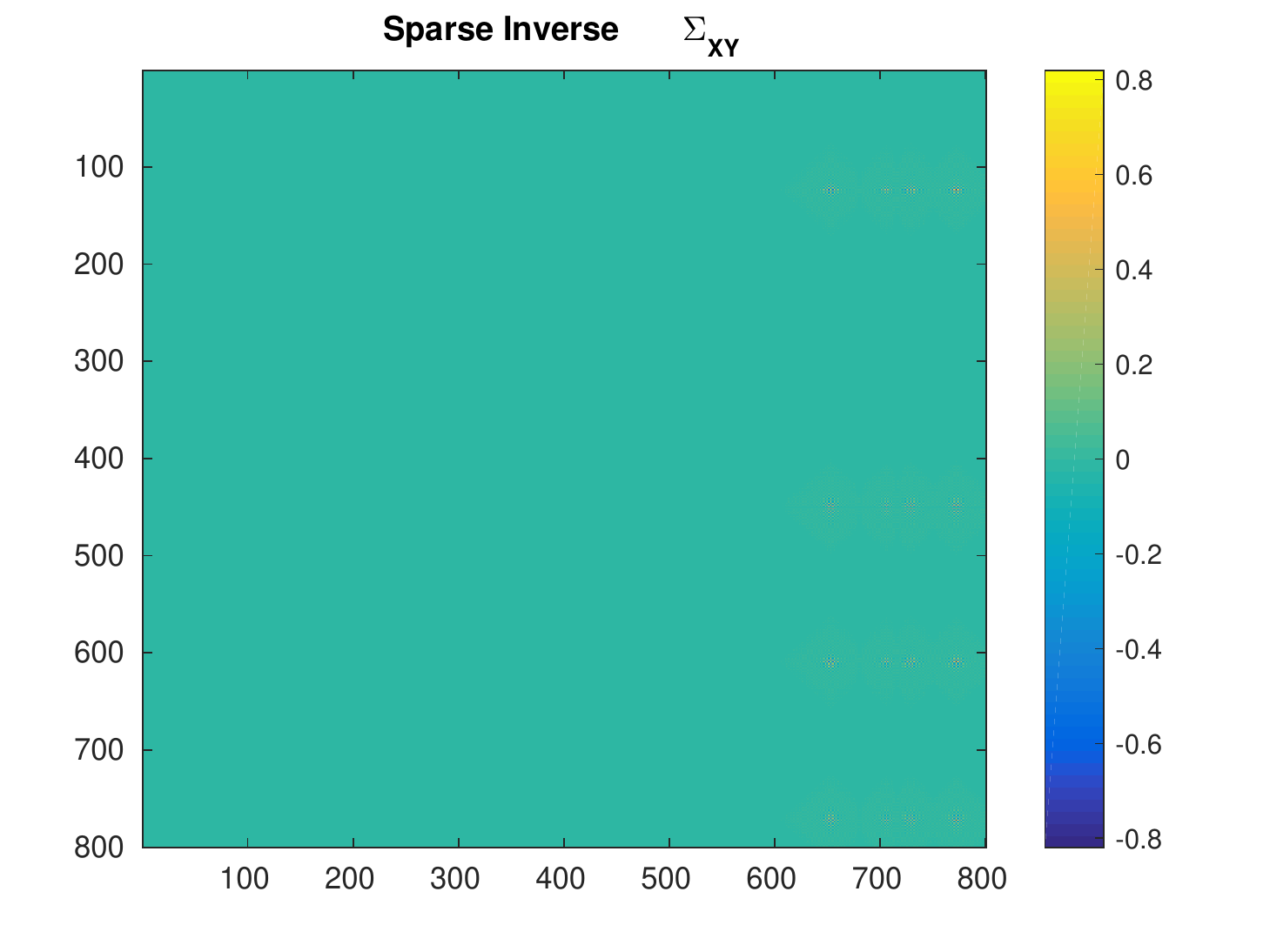}
\caption{\textit{Sparse inverse matrix. }}\label{fig:sparseinverse}
\end{figure}

\begin{enumerate}
\item $\Sigma = I_p$.  
\item $\Sigma = (\sigma_{ij})$, where $\sigma_{ij} = 0.9^{|i-j|}$ for all $i,j \in p, q$. Here $\Sigma$ are Toeplitz matrices. See the plot of the toeplitz matrx and its corresponding $\Sigma_{xy} = \Sigma_x \rho uv^T \Sigma_y$. We can see that though it is not identity matrix, it behaves more or less like an identity matrix. Note that the smaller the toeplitz constant is, the more it looks like an identity matrix.

\item $\Sigma = (\frac{\sigma^0_{ij}}{\sqrt{\sigma^0_{ii}}\sigma^0_{jj}})$. Let $\Sigma^0 = (\sigma_{ij}^0) = \Omega^{-1}$ where $\Omega = (\omega_{ij})$ with 
\begin{align*}
\omega_{ij} = \Ind_{\{i = j\}} + 0.5 \times \Ind_{\{|i-j| = 1\}}  + 0.4 \times \Ind_{\{|i-j| = 2\}}, i, j \in [p]
\end{align*}
In this case, $\Sigma_x$ and $\Sigma_y$ have sparse inverse matrices. 
\end{enumerate} 

In each example, we simulate 100 data sets, i.e., 100 $X$, and 100 $Y$ in order to average our performance. We set the number of non-zeros in the $u$ and $v$ to be 5, the index of nonzeros are randomly chosen. We will vary the number of nonzeros in the next comparison. For each simulation, we have a sequence of regularizer $\tau_u$ and $\tau_v$ to choose from. For simplicity, we chose the best $\tau_u$ and $\tau_v$ such that estimated $\hat u$ and $\hat v$ minimize the $\loss$ defined above in every methods. 

We present our result in the \autoref{tab: identity},  \autoref{tab: toeplitz} and \autoref{tab: sparseinv}.  There are some notations presented in the table and we now explain them here. $\hat \rho$ is the estimated canonical correlation between data $X$ and $Y$. $e_u = \loss(\hat u , u)$ and $e_v = \loss(\hat u , u)$. We compare our result with the methods proposed by \citet{Witten}, and \citet{sparseCCA}. Since we are not able to run the code from \citet{sparseig} very efficiently, we will compare our method with their approach in the next subsection.  In order to compare them in the same unit, we calculate the estimates of each method and then normalize them by $X \hat u$, and $Y\hat v$ respectively.  We then normalize estimates such that they all have norm 1. We report the estimated correlation, loss of $u$ and loss of $v$ as a format of $(\rho, e_u, e_v)$ for each method in all tables. 
From \autoref{tab: identity}, \autoref{tab: toeplitz}, and \autoref{tab: sparseinv},  we can see that SCCA method proposed by \citet{sparseCCA} performs similarly with ours. However, their two step procedure is computationally expensive compared to ours and hard to choose regularizers. Estimates by \citet{Witten} fails to provide accurate approximations because of the low samples size we considered.  

\begin{table}
\centering
\begin{tabular}{c|c|c|c}    \toprule
\emph{$(n, p, q)$} & Our method&SCCA& PMA \\\midrule
(400, 800,800) &(0.90,0.056,0.062)&(0.90,0.060,0.066) &  (0.71,1.17, 1.17)\\
(500, 600, 600) & (0.90,0.05,0.056)&(0.90,  0.053, 0.057)   & (0.71,0.85,0.85)\\
(700, 1200,1200) &(0.90,0.045, 0.043)&(0.90,0.045, 0.043)&(0.71, 1.09,1.09)\\\bottomrule
\hline
\end{tabular}
\caption{\textit{Error comparison for identity matrices: we use a format of $(\hat \rho, \loss(\hat u), \loss(\hat v)$ to represent each method's result. }}
\label{tab: identity}
\end{table}

\begin{table}
\centering
\begin{tabular}{c|c|c|c}    \toprule
\emph{$(n, p, q)$} & Our method&SCCA& PMA \\\midrule
(400, 800, 800) &  (0.91, 0.173 ,0.218)& (0.91, 0.213, 0.296) & (0.52,1.038,1.067)\\
(500, 600, 600) &(0.90, 0.136, 0.098)&(0.90, 0.145, 0.109) &(0.55, 1.11, 0.94)\\
(700, 1200, 1200) & (0.90, 0.109, 0.086) &(0.90, 0.110, 0.088)  &(0.60, 1.098,0.89) \\
\bottomrule
\hline
\end{tabular}
\caption{Error comparison for toeplitz matrices: \textit{we use a format of $(\hat \rho, \loss(\hat u), \loss(\hat v)$ to represent each method's result.  }}
\label{tab: toeplitz}
\end{table}

\begin{table}
\centering
\begin{tabular}{c|c|c|c}    \toprule
\emph{$(n, p, q)$} & Our method&SCCA& PMA \\\midrule
(400, 800, 800) & (0.92,0.092,0.149)& (0.92,0.129, 0.190) & (0.61, 0.93, 1.0)  \\
(500, 600, 600) & (0.90, 0.068, 0.059)&  (0.90, 0.069, 0.0623) & (0.7215, 0.67   0.45)\\
(700, 1200, 1200) &(0.90, 0.050 ,0.044)&  (0.90, 0.051, 0.047)&  (0.70, 0.76,     0.58)\\\bottomrule
\hline
\end{tabular}
\caption{\textit{Error comparison for sparse inverse matrices:  we use a format of $(\hat \rho, \loss(\hat u), \loss(\hat v)$ to represent each method's result. }}
\label{tab: sparseinv}
\end{table}

\subsection{Spiked covariance models}
In this subsection, we consider covariance matrices of $x \in \reals^p$ and $y \in \reals^q $ are spiked, i.e., 
\begin{align*}
\cov(x) =\sum_{i = 1}^{k_x} \lambda_iw_i w_i^T + I_{p},  \\
\cov(y) = \sum_{i = 1}^{k_y} \lambda_iw_i w_i^T + I_{q}.
\end{align*}

In Example \autoref{ex:svd} we will see that even we have the more observations with the number of features,  the traditional singular value decomposition  can return bad results. 

\begin{example}
\label{ex:svd}
We generate the $\Sigma_x$ and $\Sigma_y$ as follows: 
\begin{align*}
\Sigma_{x} &= \sum_{i = 1}^k \lambda_{x,i} w_{x,i} w_{x,i}^T + I\\
\Sigma_{y} &= \sum_{i = 1}^k \lambda_{y,i} w_{y,i} w_{y,i}^T + I
\end{align*}
where $w_{x,1}, \cdots w_{x, k}$,  $\reals^{p}$, $w_{y,1}, \cdots w_{y, k}$ are independent orthonormal vectors in $\reals^{p}$,  $\reals^q$ respsectively, and $\lambda_{x,i} = \lambda_{y, i} = 250$ and $k = 20$.  
The covariance $\Sigma_{xy}$ is generated as 
\begin{align*}
\Sigma_{xy} = \Sigma_x \rho u v^T \Sigma_y, 
\end{align*}
where $u$ and $v$ are the true canonical vectors and have 10 nonzero elements with indices randomly chosen. We generate the data matrices $X\in \reals^{n \times p}$ and $Y^{n\times q}$ from the distribution 
\begin{align*}
\begin{pmatrix}
x\\y
\end{pmatrix}\sim \normal (\begin{pmatrix}
0\\0
\end{pmatrix}, \begin{pmatrix}
\Sigma_x & \Sigma_{xy} \\
\Sigma_{xy}^T & \Sigma_y
\end{pmatrix}. 
\end{align*}
Therefore, when $n = 1000, p = 800, q = 800$, we should be able to estimate $u$ and $v$ using the singular value decomposition of the matrix 
\begin{align*}
\hat\Sigma_x^{-1/2}\hat \hat{\Sigma}_{xy} \hat{\Sigma}_y ^{-1/2} = (X^T X)^{-1/2} (X^T Y) (Y^T Y)^{-1/2}. 
\end{align*}
However, the estimated $\hat u$ and $\hat v$ can be seen in \autoref{svduv}. The results are wrong and not sparse. This is an indication that we need more samples to estimate the canonical vectors. As we increase the sample size to $n =3000$, estimates of $u$ and $v$ are more accurate but not very sparse, as seen in  \autoref{svduvmore}. For our method, we use $n = 400$,  the estimated $\hat u$ and $\hat v$ of our methods can be seen in \autoref{ouruv}. Our method returns sparse and better estimates for $u$ and $v$. 

\begin{figure}
\centering
\includegraphics[height = 6 cm]{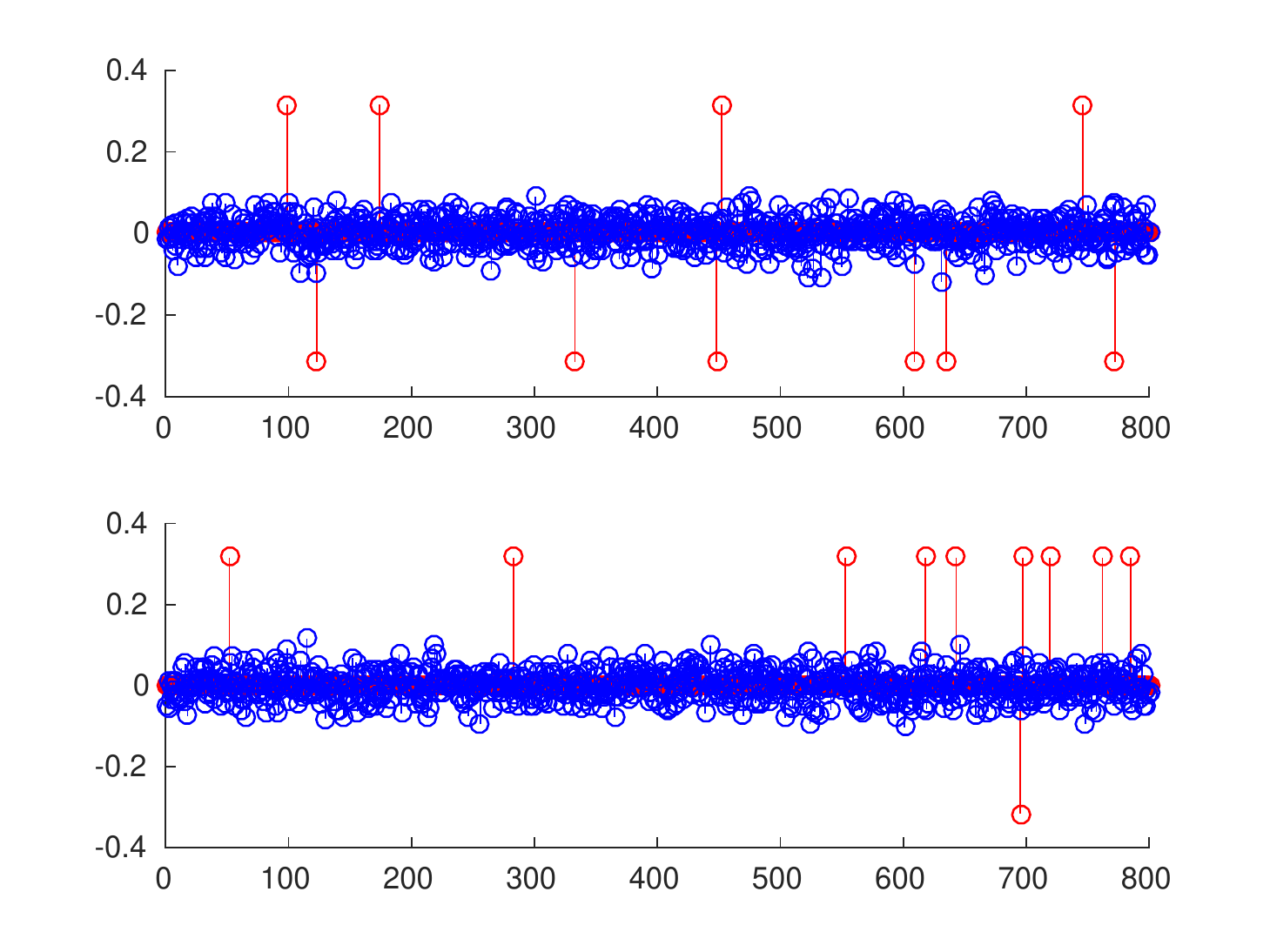}
\caption{\textit{Plot of Estimated $\hat u$, $\hat v$ from singular value decomposition (blue) and true $u$, $v$ (red)}, The number of observations $n = 1000$, with $p = 800, q = 800$. Estimated $u$ and $v$ using singular value decomposition of the transformed estimated covariance matrix are not good estimated of the true $u$ and $v$. The results are wrong and not sparse. This is an indication that we need more samples to estimate the canonical vectors. }
\label{svduv}
\end{figure}

\begin{figure}
\centering
\includegraphics[height = 6 cm]{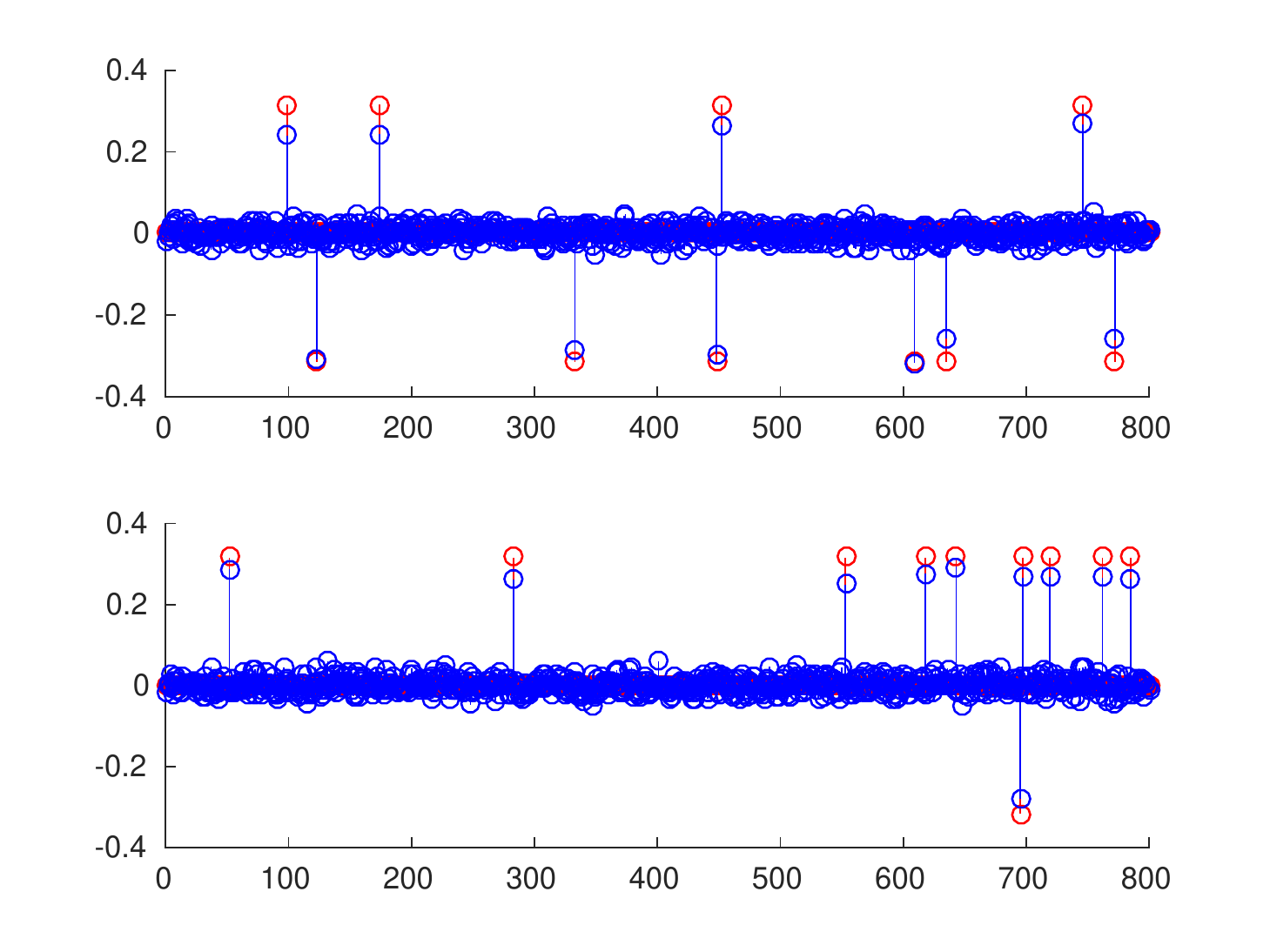}
\caption{\textit{Plot of Estimated $u$, $v$ from singular value decomposition (blue) and true $u$, $v$ (red)}, The number of observations $n = 1000$, with $p = 800, q = 800$. Estimated $u$ and $v$ using singular value decomposition of the transformed estimated covariance matrix are not good estimated of the true $u$ and $v$. The results are wrong and not sparse. This is an indication that we need more samples to estimate the canonical vectors. }
\label{svduvmore}
\end{figure}

\begin{figure}
\centering
\includegraphics[scale = 0.8]{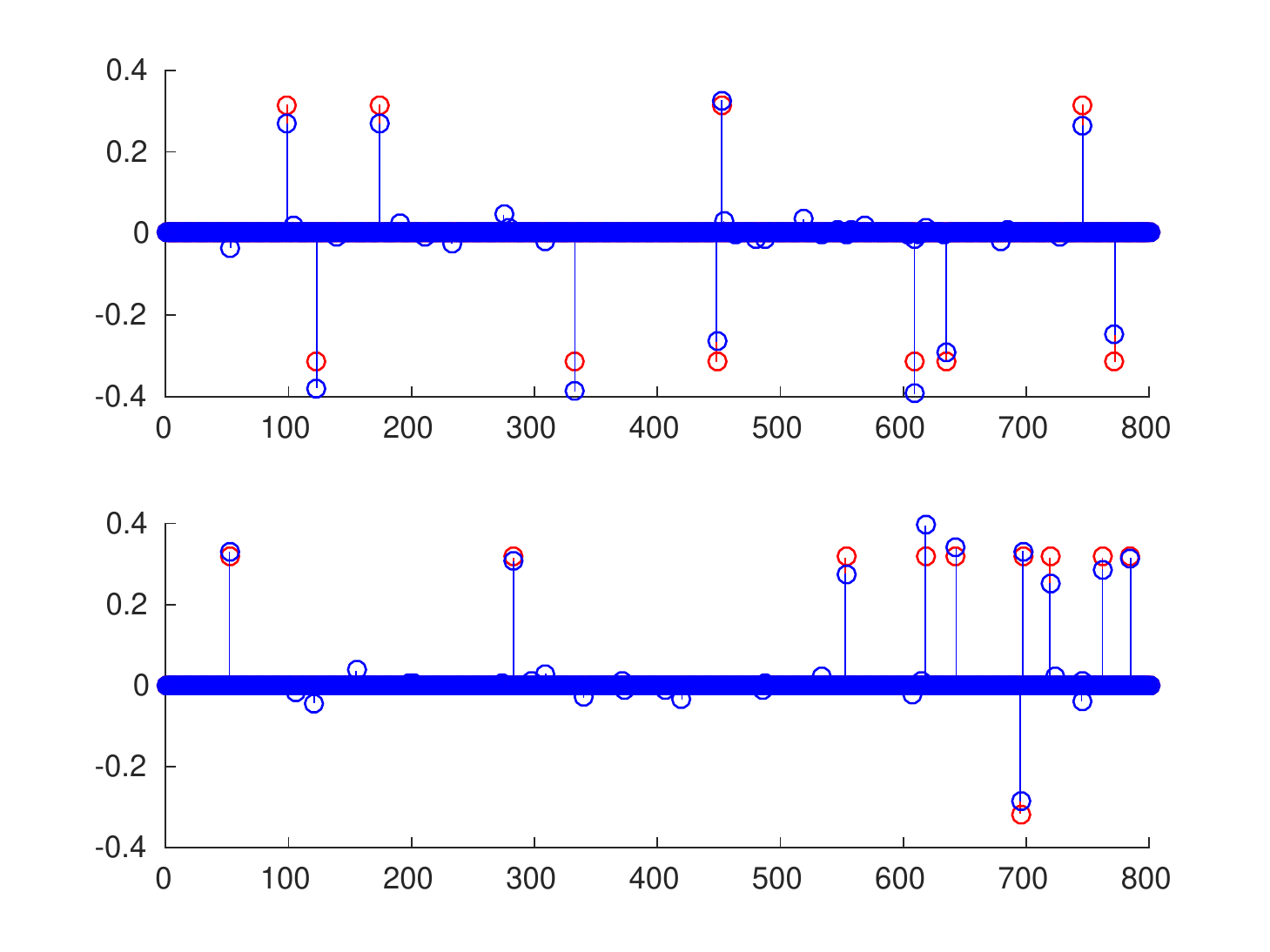}
\caption{\textit{Plot of Estimated $u$, $v$ from our method (blue) and true $u$, $v$ (red)}, The number of observations is $n=400$, with $p = 800, q = 800$. Note that we use less samples than the results of the \autoref{svduv}. We can successfully recover the correct support using our method. }
\label{ouruv}
\end{figure}
\end{example}

\subsection{A detailed Comparison}
To further illustrate the accuracy of our methods, we compare our methods with the methods proposed by \citet{sparseig} using the plot of scaled sample size versus estimation error. Here we choose the same set up with their setup since their method performed the best in comparison with PMA. The data was simulated as follows: 
\begin{align*}
\rho = 0.9, u_{j} = \frac{1}{\sqrt{5}}, v_j = \frac{1}{\sqrt{5}} \text{ for } j = 1,6,11,16,21.
\end{align*}
And $\Sigma_x$ and $\Sigma_y$ are block diagonal matrix with five blocks, each of dimension $d/5 \times d/5$, where the $(j,j')$ th element of each block takes value $0.7^{|j-j'|}$. The result is done for $p_u = 300$, $p_v = 300$ and average over 100 simulations. 

Though the set up of our simulation is the same with \citet{sparseig}, we would like to investigate when the rescaled sample size is small, i.e., when the number of samples is small. 
As shown in \autoref{compareuv}, our method outperforms their method. 
\begin{figure}
\includegraphics[scale=0.5]{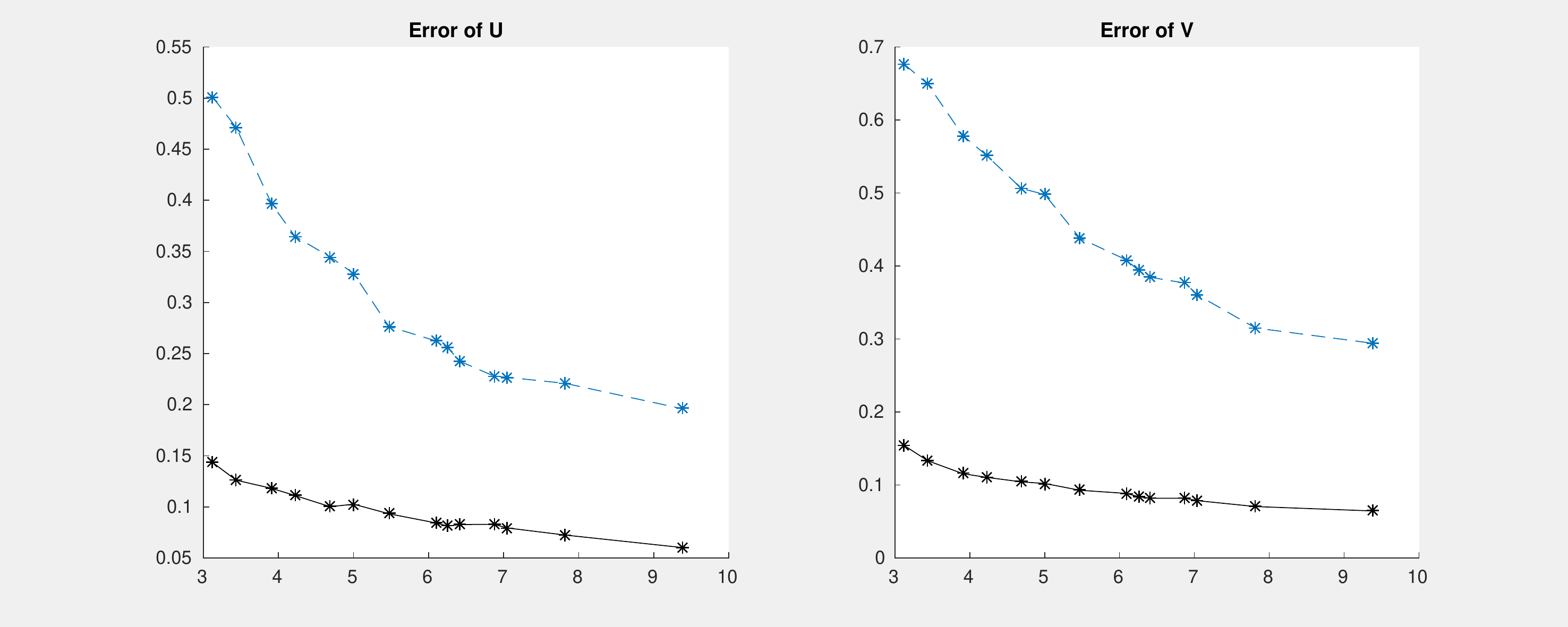}
\caption{A comparison between performance of our method and method proposed by \citet{sparseig}. The left panel is the $\loss(\hat u)$ versus rescaled sample size $n/s\log(d)$, and the right panel is the $\loss(\hat v)$ versus rescaled sample size $n/s\log d$.  Blue line is the result of \citet{sparseig} and the black line is the result of our method. }
\label{compareuv}
\end{figure}

\section{Discussion and future work}
\label{sec: discussion}
We proposed a sparse canonical correlation framework and show how to solve it efficiently using ADMM and TFOCS. We presented different simulation scenarios and showed our estimates are more sparse and accurate. Though our formulation is non-convex, global solutions are often obtained, as seen among simulated examples.  We are currently working on some applications on real data sets. 
\bibliography{Untitled}

\begin{thebibliography}{11}
\providecommand{\natexlab}[1]{#1}
\providecommand{\url}[1]{\texttt{#1}}
\expandafter\ifx\csname urlstyle\endcsname\relax
  \providecommand{\doi}[1]{doi: #1}\else
  \providecommand{\doi}{doi: \begingroup \urlstyle{rm}\Url}\fi

\bibitem[Baik et~al.(2005)Baik, Arous, P{\'e}ch{\'e}, et~al.]{Bphase}
Jinho Baik, G{\'e}rard~Ben Arous, Sandrine P{\'e}ch{\'e}, et~al.
\newblock Phase transition of the largest eigenvalue for nonnull complex sample
  covariance matrices.
\newblock \emph{The Annals of Probability}, 33\penalty0 (5):\penalty0
  1643--1697, 2005.

\bibitem[Becker et~al.(2011)Becker, Cand{\`e}s, and Grant]{TFOCS}
Stephen~R. Becker, Emmanuel~J. Cand{\`e}s, and Michael~C. Grant.
\newblock Templates for convex cone problems with applications to sparse signal
  recovery.
\newblock \emph{Mathematical Programming Computation}, 3\penalty0 (3):\penalty0
  165, 2011.
\newblock ISSN 1867-2957.
\newblock \doi{10.1007/s12532-011-0029-5}.
\newblock URL \url{http://dx.doi.org/10.1007/s12532-011-0029-5}.

\bibitem[{Chen} et~al.(2013){Chen}, {Gao}, {Ren}, and {Zhou}]{thresholding}
M.~{Chen}, C.~{Gao}, Z.~{Ren}, and H.~H. {Zhou}.
\newblock {Sparse CCA via Precision Adjusted Iterative Thresholding}.
\newblock \emph{ArXiv e-prints}, November 2013.

\bibitem[d'Aspremont et~al.(2007)d'Aspremont, Ghaoui, Jordan, and
  Lanckriet]{sparsePCA}
Alexandre d'Aspremont, Laurent~El Ghaoui, Michael~I. Jordan, and Gert R.~G.
  Lanckriet.
\newblock A direct formulation for sparse pca using semidefinite programming.
\newblock \emph{SIAM Review}, 49\penalty0 (3):\penalty0 434--448, 2007.
\newblock \doi{10.1137/050645506}.

\bibitem[{Gao} et~al.(2014){Gao}, {Ma}, and {Zhou}]{sparseCCA}
C.~{Gao}, Z.~{Ma}, and H.~H. {Zhou}.
\newblock {Sparse CCA: Adaptive Estimation and Computational Barriers}.
\newblock \emph{ArXiv e-prints}, September 2014.

\bibitem[Hotelling(1936)]{HOTELLING01121936}
HAROLD Hotelling.
\newblock Relations between two sets of variates.
\newblock \emph{Biometrika}, 28\penalty0 (3-4):\penalty0 321--377, 1936.
\newblock \doi{10.1093/biomet/28.3-4.321}.
\newblock URL \url{http://biomet.oxfordjournals.org/content/28/3-4/321.short}.

\bibitem[Johnstone and Lu(2009)]{johnstone}
Iain~M. Johnstone and Arthur~Yu Lu.
\newblock On consistency and sparsity for principal components analysis in high
  dimensions.
\newblock \emph{Journal of the American Statistical Association}, 104\penalty0
  (486):\penalty0 682--693, 2009.
\newblock \doi{10.1198/jasa.2009.0121}.
\newblock URL \url{http://dx.doi.org/10.1198/jasa.2009.0121}.
\newblock PMID: 20617121.

\bibitem[Parikh and Boyd(2014)]{ADMM}
Neal Parikh and Stephen Boyd.
\newblock Proximal algorithms.
\newblock \emph{Found. Trends Optim.}, 1\penalty0 (3):\penalty0 127--239,
  January 2014.
\newblock ISSN 2167-3888.
\newblock \doi{10.1561/2400000003}.
\newblock URL \url{http://dx.doi.org/10.1561/2400000003}.

\bibitem[Paul(2007)]{paul2007}
Debashis Paul.
\newblock Asymptotics of sample eigenstructure for a large dimensional spiked
  covariance model.
\newblock \emph{Statistica Sinica}, pages 1617--1642, 2007.

\bibitem[{Tan} et~al.(2016){Tan}, {Wang}, {Liu}, and {Zhang}]{sparseig}
K.~M. {Tan}, Z.~{Wang}, H.~{Liu}, and T.~{Zhang}.
\newblock {Sparse Generalized Eigenvalue Problem: Optimal Statistical Rates via
  Truncated Rayleigh Flow}.
\newblock \emph{ArXiv e-prints}, April 2016.

\bibitem[Witten et~al.(2009)Witten, Tibshirani, and Hastie]{Witten}
Daniela~M. Witten, Robert Tibshirani, and Trevor Hastie.
\newblock A penalized matrix decomposition, with applications to sparse
  principal components and canonical correlation analysis.
\newblock \emph{Biostatistics}, 10\penalty0 (3):\penalty0 515--534, 2009.
\newblock \doi{10.1093/biostatistics/kxp008}.
\newblock URL
  \url{http://biostatistics.oxfordjournals.org/content/10/3/515.abstract}.

\end{thebibliography}
\newpage
\section{Appendix}
\subsection*{Detailed derivations for linearized ADMM}
The augmented Lagrangian form of \ref{foru} is 
\begin{align*}
L(u,z, \xi ) =  -u^TX^TYv + \tau_1\|u\|_1 + \Ind\{ \|z\|_2 \leq 1\}+\phi^T(Xu-z) + \frac{\rho}{2}\|Xu-z\|_2^2 . 
\end{align*}
Thus, the updates of variables are solved through
\begin{align*}
u^{k+1}&=  \Argmin_u\{-u^TX^TYv + \tau_1\|u\|_1 + {\phi^k}^T(Xu-z^k) + \frac{\rho}{2}\|Xu-z^k\|_2^2\}\\
z^{k+1} &=\Argmin_z\{\Ind\{ \|z\|_2 \leq 1\} +\phi^T(Xu^{k+1} - z)+ \frac{\rho}{2}\|Xu^{k+1}-z\|_2^2\}\\
\phi^{k+1} &= \phi^k + \rho(Xu^{k+1} - z^{k+1})
\end{align*}
Now, we let $\xi^k = \frac{\phi^k}{\rho}$ and add some constants, and we get 

\begin{align*}
u^{k+1}&=  \Argmin_u\{-u^TX^TYv + \tau_1\|u\|_1 + \rho{\xi^k}^T(Xu-z^k) + \frac{\rho}{2}\|Xu-z^k\|_2^2 + \frac{\rho}{2}\|\xi\|_2^2\}\\
z^{k+1} &= \Argmin_z\{\Ind\{ \|z\|_2 \leq 1\} +\rho\xi^T(Xu^{k+1} - z)+ \frac{\rho}{2}\|Xu^{k+1}-z\|_2^2+ \frac{\rho}{2}\|\xi\|_2^2\}\\
\xi^{k+1} &= \xi^k + (Xu^{k+1} - z^{k+1}).
\end{align*}

Therefore, we have 

\begin{align*}
u &\leftarrow \Argmin_u\{-u^TX^TYv + \tau_1\|u\|_1 + \frac{\rho}{2}\|Xu-z+\xi\|_2^2\}\\
z &\leftarrow \Argmin_z\{\Ind\{ \|z\|_2 \leq 1\} + \frac{\rho}{2}\|Xu-z+\xi\|_2^2\}\\
\xi &\leftarrow \xi + (Xu - z). 
\end{align*}

The linearized ADMM replace the quadratic term by a linear term in order to speed up: 

\begin{align*}
u &\leftarrow \Argmin_u\{-u^TX^TYv + \tau_1\|u\|_1 + \rho(X^TXu^k - X^Tz^k)^Tu + \frac{\mu}{2}\|u-u^k\|_2^2 \}\\
z &\leftarrow \Argmin_z\{\Ind\{ \|z\|_2 \leq 1\} + \frac{\rho}{2}\|z-Xu^{k+1}+\xi^k\|_2^2\}\\
\xi &\leftarrow \xi + (Xu^{k+1} - z^{k+1}). 
\end{align*}
Let $\rho = 1/\lambda$, and $\mu = \frac{1}{\mu}$,  we get 
\begin{align*}
u &\leftarrow \Argmin_u\{-u^TX^TYv + \tau_1\|u\|_1 +\frac{1}{\lambda}(X^TXu^k - X^Tz^k)^Tu + \frac{1}{2\mu}\|u-u^k\|_2^2 \}\\
z &\leftarrow \Argmin_z\{\Ind\{ \|z\|_2 \leq 1\} + \frac{\rho}{2}\|z-Xu^{k+1}+\xi^k\|_2^2\}\\
\xi &\leftarrow \xi + (Xu^{k+1} - z^{k+1}). 
\end{align*}
For the first minimization problem, after some simple algebra, we can get: 

\begin{align*}
u^{k+1} & = \Argmin_u\{-u^TX^TYv + \tau_1\|u\|_1 + \frac{1}{2\mu}\|u - (u^k - \frac{\mu}{\lambda}(X^T(Xu^k - z^k + \xi^k))\|_2^2\}
\end{align*}
Therefore, our detailed updates are: 
\begin{align*}
u^{k+1} &\leftarrow \prox_{\mu f}(u^k - \frac{\mu}{\lambda}X^T(Xu^k - z^k + \xi^k)) \\
z^{k+1} &\leftarrow \prox_{\lambda g}(Xu^{k+1} + \xi^k)\\
\xi^{k+1} &\leftarrow \xi^k  + Xu^{k+1} - z^{k+1}.
\end{align*}

The analytic proximal mapping of $f$ and $g$ can be easily derived: $f(x)$ involves soft threshold and $g(x)$ is projection to the convex set (a norm ball):
\begin{align*}
\prox_{\mu f}(x) = \begin{cases}  x + \mu c  - \mu \tau & \text{ if } x+\mu c > \mu \tau\\
x + \mu c + \mu \tau  &\text{ if } x+\mu c < -\mu \tau\\ 0  &\text{ else }\end{cases}
\end{align*}
\begin{align}
\prox_{\lambda g}(x) = \begin{cases} x &\text{ if } \|x\|_2 \leq 1\\
\frac{x}{\|x\|_2} &\text{ else}
\end{cases}
\end{align}
where $ c = X^TYv$ (the gradient of the objective function with respect to one canonical vector while fixing the other canonical vector).

\end{document}